\def\eqref#1{equation~\ref{#1}}
\def\1{\bm{1}}
\def\vz{{\bm{z}}}
\DeclareMathAlphabet{\mathsfit}{\encodingdefault}{\sfdefault}{m}{sl}
\SetMathAlphabet{\mathsfit}{bold}{\encodingdefault}{\sfdefault}{bx}{n}
\def\gA{{\mathcal{A}}}
\def\gE{{\mathcal{E}}}
\def\gF{{\mathcal{F}}}
\def\gG{{\mathcal{G}}}
\def\gH{{\mathcal{H}}}
\def\sQ{{\mathbb{Q}}}
\def\sR{{\mathbb{R}}}
\newtheorem{assumption}{Assumption} 
\newtheorem{definition}{Definition} 
\newtheorem{theorem}{Theorem} 
\newtheorem{proposition}{Proposition} 
\newcommand{\question}[1]{{\textcolor{LimeGreen}{{\textbf{#1}}}}}
\newcommand{\rebuttal}[1]{{\textcolor{black}{{#1}}}}
\title{CausalAF: Causal Autoregressive Flow for Safety-Critical Driving Scenario Generation}
\author{%
  Wenhao Ding$^1$, Haohong Lin$^1$, Bo Li$^2$, Ding Zhao$^1$\\
  $^1$Carnegie Mellon University \\
  $^2$University of Illinois at Urbana-Champaign\\
  \texttt{\small\{wenhaod, haohongl\}@andrew.cmu.edu, lbo@illinois.edu, dingzhao@cmu.edu}  \\
}
\begin{document}
\maketitle
\doparttoc 
\faketableofcontents 


\begin{abstract}
Generating safety-critical scenarios, which are crucial yet difficult to collect, provides an effective way to evaluate the robustness of autonomous driving systems.
However, the diversity of scenarios and efficiency of generation methods are heavily restricted by the rareness and structure of safety-critical scenarios.
Therefore, existing generative models that only estimate distributions from observational data are not satisfying to solve this problem.
In this paper, we integrate causality as a prior into the scenario generation and propose a flow-based generative framework, \textit{Causal Autoregressive Flow (CausalAF)}. 
\textit{CausalAF} encourages the generative model to uncover and follow the causal relationship among generated objects via novel causal masking operations instead of searching the sample only from observational data.
By learning the cause-and-effect mechanism of how the generated scenario causes risk situations rather than just learning correlations from data, \textit{CausalAF} significantly improves learning efficiency.
Extensive experiments on three heterogeneous traffic scenarios illustrate that \textit{CausalAF} requires much fewer optimization resources to effectively generate safety-critical scenarios. We also show that using generated scenarios as additional training samples empirically improves the robustness of autonomous driving algorithms.
\end{abstract}

\keywords{Causal Generative Models, Scenario Generation, Autonomous Driving} 


\section{Introduction}
\label{sec-intro}


According to a recent report~\cite{disengagement}, several companies have made their autonomous vehicles (AVs) drive more than 10,000 miles without disengagement.
It seems that current AVs have achieved great success in normal scenarios that cover most cases in daily life.
However, we are still unsure about their performance under critical cases, which could be too rare to collect in the real world.
For example, a kid suddenly running into the drive lane chasing a ball leaves the AV a very short time to react.
This kind of situation, named \textit{safety-critical scenarios}, could be the last puzzle to evaluate the safety of AVs before deployment.

Generating safety-critical scenarios with Deep Generative Models (DGMs), which estimate the distribution of data samples with neural networks, is viewed as a promising way in recent works~\cite{ding2022survey}.
Existing literature either searches in the latent space to build scenarios~\cite{ding2021semantically, ding2021multimodal} or directly uses optimization to find the adversarial examples~\cite{ding2020learning, zhang2022adversarial}.
However, such a generation task is still challenging since we are required to simultaneously consider fidelity to avoid conjectural scenarios that will never happen in the real world, as well as the safety-critical level which is indeed rare compared with normal scenarios. 
In addition, generating reasonable threats to vehicles' safety can be inefficient if the model purely relies on unstructured observational data, as the safety-critical scenarios are rare and follow fundamental physical principles. 
Inspired by the fact that humans are good at abstracting the causation beneath the observations with prior knowledge, we explore a new direction toward causal generative models for this generation task.

\begin{wrapfigure}{r}{0.5\textwidth} 
  \vspace{-1mm}
  \centering
  \includegraphics[width=0.5\textwidth]{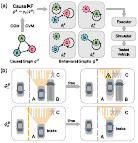}
  \vspace{-5.5mm}
  \caption{\textbf{(a)} Diagram of the generation pipeline using \textit{CausalAF}. \textbf{(b)} Two scenarios obtained by two Behavioral Graphs shows the causality behind scenarios. The top one is safety-critical because the view of vehicle $A$ is blocked by vehicle $B$.}
  \label{pipeline}
  \vspace{-4.5mm}
\end{wrapfigure}

To have a glance at causality in traffic scenarios, we show an example in Figure~\ref{pipeline}(b).
When a vehicle $B$ is parked in the middle between the autonomous vehicle $A$ and pedestrian $C$, the view of $A$ is blocked, making $A$ have little time to brake and thus have a potential collision with $C$. As human drivers, we believe $B$ should be the cause of the accident.
This scenario may take AVs millions of hours to collect~\citep{feng2021intelligent}. 
Even if we use traditional generative models to generate this scenario, the model tends to memorize the location of all objects without learning the reasons.
As a remedy, we can incorporate causality into generative models for the efficient generation of such safety-critical scenarios.

In this paper, we propose a structured generative model with causal priors.
We model the causality as a directed acyclic graph (DAG) named Causal Graph (CG)~\citep{pearl2009causality}.
To facilitate CG in the traffic scenario, we propose another Behavioral Graph (BG) for representing the interaction between objects in scenarios.
The graphical representation of both graphs makes it possible to use the BG to unearth the causality given by CG.
Based on BG, we propose the first generative model that integrates causality into the graph generation task and names it \textit{CausalAF}.
Specifically, we propose two types of causal masks -- Causal Order Masks (COM) that modifies the node order for node generation, and Causal Visibility masks (CVM) that removes irrelevant information for edge generation.
We show the diagram of \textit{CausalAF} generation in Figure~\ref{pipeline}(a) and summarize our main contributions as following:
\vspace{-1mm}
\begin{itemize}[leftmargin=0.2in]
\item We propose a causal generative model \textit{CausalAF} that integrates causal graphs with two novel mask operators for safety-critical scenario generation. 
\vspace{-1mm}
\item 
We show that \textit{CausalAF} dramatically improves the efficiency and performance on three standard traffic settings compared with purely data-driven baselines.
\vspace{-1mm}
\item 
We show that the training on generated safety-critical scenarios improves the robustness of 4 reinforcement learning-based driving algorithms.
\end{itemize}
\vspace{-3mm}

\vspace{-1mm}
\section{Graphical Representation of Scenarios}
\vspace{-1mm}
\label{sec:graph-representation}

We start by proposing a novel representation of traffic scenarios using a graph structure. Then, we propose to generate such a graphical representation with an autoregressive generative model.

\subsection{Behavioral Graph}

Traffic scenarios mainly consist of interactions between static and dynamic objects, which can be naturally described by a graph structure.
Therefore, we define Behavioral Graph $\gG^B$ to represent driving scenarios with the following definition.

\begin{definition}[Behavioral Graph, BG]
\rebuttal{Suppose a scenario has maximum $m$ objects with $n$ types. A Behavioral Graph $\gG^B = (V^B, E^B)$ is a directed graph with node matrix $V^B\in \sR^{m\times n}$ representing the types of objects and edge matrix $E^B \in \sR^{m\times m\times (h_1+h_2)}$ representing the interaction between objects, where $h_1$ is the number of edge types and $h_2$ is the dimension of edge attributes.
}
\label{action_graph}
\end{definition}

\rebuttal{According to this definition, $\gG^B$ works as a planner that controls the behaviors of objects in the scenario based on the types of nodes $V^B$ and edges $E^B$. For example, two nodes $v_1$ and $v_2$ represent two vehicles and the edge from $v_1$ to $v_2$ represents the relative velocity from $v_1$ to $v_2$.}
Specifically, a self-loop edge $(i, i)$ represents that one object takes one action irrelevant to other objects (e.g., a car goes straight or turns left with no impact on other road users), while other edges $(i, j)$ means object $i$ takes one action related to object $j$ (e.g., a car $i$ moves towards a pedestrian $j$). 
The edge attributes represent the properties of actions. For instance, the attribute $[x, y, v_x, v_y]$ of one edge has the following meaning: $x$ and $y$ are positions, and $v_x$ and $v_y$ are the velocities. 

\subsection{Behavioral Graph Generation with Autoregressive Flow}

Generally, there are two ways to generate graphs: one is simultaneously generating all nodes and edges, and the other is iteratively generating nodes and adding edges between nodes. 
Considering the directed nature of $\gG^B$, we utilize the Autoregressive Flow model (AF)~\citep{huang2018neural}, which is a type of sequentially DGMs, to generate nodes and edges of $\gG^B$ step by step.
It uses a invertible and differentiable transformation $\gF_{\phi}$ parametrized by $\phi$ to convert the graph $\gG^B$ to a latent variable $\vz$ that follows a base distribution $p(\vz)$ (e.g., Normal distribution $\mathcal{N}(\bm{0}, \bm{I})$). According to the change of variables theorem, we can obtain
$
	p_{\phi}(\gG^B) = p(\gF_{\phi}(\gG^B))\left|\text{det} \frac{\partial \gF_{\phi}(\gG^B)}{\partial \gG^B} \right|.
$
To increase the representing capability, $\gF_{\phi}$ contains multiple functions $f_{i}$ for $i \in \{0,\dots,K \}$. The entire transformation is represented as $\gG^B = \vz_K = f_K^{-1} \circ \cdots \circ f_{0}^{-1} \overset{\Delta}{=} \gF_{\phi}^{-1}(\vz_0)$ by repeatedly substituting the variable for the new variable $z_i$, where $\circ$ means the function composition. Eventually, we obtain the likelihood
\begin{equation}
    \log p_{\phi}(\gG^B) = p(\vz_0) - \sum_{i=1}^K \log \left| \text{det}\frac{d f^{-1}_{i}}{d \vz_{i-1}} \right|,
\label{likelihood}
\end{equation}
which will be used to learn the parameter $\phi$ based on empirical distribution of $\gG^B$. After training, we can sample from $p_{\phi}(\gG^B)$ by using the reverse function $\gF^{-1}_{\phi}$.
Let $V^B_{[i]} \in \sR^{n}$ and $E^B_{[i,j]}\in  \sR^{h_1+h_2}$ represent node $i$ and edge $(i, j)$ of $\gG^B$, then we can generate them with the sampling procedure:
\begin{equation}
\begin{split}
	V^B_{[i]} \sim \mathcal{N}\left(\mu^{v}_i, (\sigma^{v}_i)^2\right) = \mu^{v}_i + \sigma^{v}_i\odot \epsilon\ \ \text{and} \ \ 
	E^B_{[i,j]} \sim \mathcal{N}\left(\mu^{e}_{ij}, (\sigma^{e}_{ij})^2\right) = \mu^{e}_{ij} + \sigma^{e}_{ij}\odot \epsilon,
\end{split}
\label{generation_1}
\end{equation}
where $\odot$ denotes the element-wise product and $\epsilon$ follows a Normal distribution $\mathcal{N}(\bm{0}, \bm{I})$.
Variables $\mu^{v}_i$, $\sigma^{v}_i$, $\mu^{e}_{ij}$, and $\sigma^{e}_{ij}$ are obtained from $\gF_{\phi}$ in an autoregressive manner:
\begin{equation}
\begin{split}
    \mu^{v}_i, \sigma^{v}_i = \gF_{\phi} \Big(V^B_{[0:i-1]}, E^B_{[0:i-1, 0:m]}\Big)\ \ \text{and} \ \ 
    \mu^{e}_{ij}, \sigma^{e}_{ij} = \gF_{\phi} \Big(V^B_{[0:i]}, E^B_{[0:i,0:j-1]}\Big),
\end{split}
\label{generation_2}
\end{equation}
where $[0:i]$ represents the elements from index $0$ to index $i$.
After the sampling, we obtain the node and edge type by converting $V^B$ and part of $E_B$ from continuous values to one-hot vectors:
\begin{equation}
    V^B_{[i]}\gets \text{onehot}\Big[\arg\max (V^B_{[i]})\Big)\Big], \ \ E^B_{[i, j, 0:h_1]} \gets \text{onehot}\Big[\arg\max  (E^B_{[i,j,0:h_1]})\Big]\ \ \forall i, j \in [m].
\label{onehot}
\end{equation}
Intuitively, the generation of one node depends on all previously generated nodes and edges. One node only has edges pointing to the nodes that are generated before it. 
To illustrate this autoregressive generation process, we provide an example with three nodes in Figure~\ref{generation}(a).

\section{Causal Autoregressive Flow (CausalAF)}

In this section, we discuss how to integrate causality into the autoregressive generating process of the Behavioral Graph $\gG^B$.
In general, we transfer the prior knowledge from a causal graph to $\gG^B$ by \rebuttal{increasing the structural similarity. However, calculating such similarity is not easy because of the discrete nature of graphs.}
To solve this problem, we propose \textit{CausalAF} with two causal masks, i.e., Causal Order Masks (COM) and Causal Visible Masks (CVM), that make the generated $\gG^B$ follow the causal information.

\subsection{Causal Generative Models}

\begin{definition}[Structural Causal Models~\cite{peters2017elements}, SCM]
A structural causal model (SCM) $\mathfrak{C}:= (\bm{S}, \bm{U})$ consists of a collection $\bm{S}$ of $m$ functions, $X_j := f_j(\textbf{PA}_j, U_j),\  \forall j \in [m]$, where $\textbf{PA}_j \subset \{ X_1,\dots,X_m \} \backslash \{X_j \}$ are called parents of $X_j$; and a joint distribution $\bm{U} = \{ U_1,\dots,U_m \}$ over the noise variables, which are required to be jointly independent.
\label{def_scm}
\end{definition}
\begin{definition}[Causal Graphs~\cite{peters2017elements}, CG]
The causal graph $\gG^C$ of an SCM is obtained by creating one node for each $X_j$ and drawing directed edges from each parent in $\textbf{PA}_j(\gG^C)$ to $X_j$. The representation of $\gG^C = (V^C, E^C)$ consists of the node vector $V^C \in \{0, 1\}^{m}$ and the adjacency matrix $E^C \in \{0, 1 \}^{m\times m\times h_1}$. Each edge $(i, j)$ represents a causal relation from node $i$ to node $j$. 
\end{definition}

We formally describe the causality based on the above definitions of SCM and CG. In fact, the generative model $p_{\phi}(\gG^B)$ mentioned in Section~\ref{sec:graph-representation} shares a very similar definition with SCM except that $\gG^B$ does not follow the order of causality. This inspires us that we can convert $p_{\phi}(\gG^B)$ to an SCM by incorporating the causal graph $\gG^C$ into the generation process.
In this paper, we assume the causal graph $\gG^C$ can be summarized by expert knowledge. Therefore, we incorporate a given $\gG^C$ into $p_{\phi}(\gG^B|\gG^C)$ by regularizing the generative process with two novel masks as shown in Figure~\ref{generation}.

\begin{figure}
  \vspace{-2mm}
  \centering
  \includegraphics[width=1.0\textwidth]{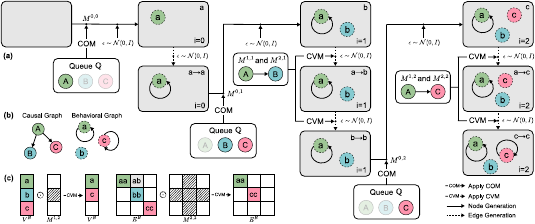}
  \vspace{-5mm}
  \caption{\textbf{(a)} \rebuttal{The generation process of a BG, which starts from an empty graph. We add one node or one edge at each step. COM is applied to select nodes following the CG and CVM is applied to mask out non-parent nodes following the CG.} \textbf{(b)} CG and BG used in the example. \textbf{(c)} The explanation of CVM when generating edges for $c$, where irrelevant node $b$ is masked out in both $V^B$ and $E^B$.}
  \label{generation}
  \vspace{-2mm}
\end{figure}

\subsection{Causal Graph Integration}

\textbf{Causal Order Masks (COM)}
The order is vital during the generation of $\gG^B$ since we must ensure the cause is generated before the effect. To achieve this, we maintain a priority queue $\sQ$ to store the valid child types according to the causal relation in $\gG^C$. $\sQ$ is initialized with $\sQ = \{ i |\ \textbf{PA}_i(\gG^C) = \emptyset\ , \forall i \in [m] \}$, which contains all nodes that do not have parent nodes. Then, in each node generation step, we update $\sQ$ by removing the generated node $i$ and adding the child nodes of $i$. Since one node may have multiple parents thus it is valid only if all of its parents have been generated.
We use $\sQ$ to create a $k$-hot mask $M^{0, i} \in \sR^n$, where the element is set to 1 if it is a valid type. Then, we apply COM to the node matrix by
$V^B_{[i]} \gets M^{0, i} \odot V^B_{[i]}$, where $V^B_{[i]}$ is the node vector obtained from $\gF_{\phi}$ for node $i$. Intuitively, this mask sets the probability of the invalid node types to 0 to make sure the generated node always follows the correct order. 

\textbf{Causal Visible Masks (CVM)}
Ensuring a correct causal order is still insufficient to represent the causality. \rebuttal{Thus, we further propose another type of mask called CVM, which removes the non-causal connections, i.e., non-parent nodes to the current node in $\gG^C$, when generating edges.}
Specifically, we generate two binary masks $M^{1, i} \in \sR^{m\times n}$ and $M^{2, i} \in \sR^{m\times m\times (h_1+h_2)}$ with
$
    M^{1, i}_{[j, :]} = 0\ \text{and}\ M^{2, i}_{[j, i, :]} = 0\ , \forall j \notin \textbf{PA}_i(\gG^C).
$
Then, we apply them to update the node matrix and edge matrix by $V^B \gets M^{1, i} \odot V^B\ \ \text{and}\ \ E^B \gets M^{2, i} \odot E^B$. We illustrate an example of this process in Figure~\ref{generation}(c). Assume we are generating edges for node $c$. We need to remove node $b$ since $\gG^C$ tells us that $B$ does not have edges to node $C$. After applying $M^v$ and $M^e$, we move the features of node $c$ to the previous position of $b$. This permuting operation is important since the autoregressive model is not permutation invariant.

\subsection{Optimization of Safety-critical Generation}

After introducing the generative process of \textit{CausalAF}, we now turn to the optimization procedure.
The target is to generate scenarios $\tau = \gE(\gG^B)$ with an executor $\gE$ to satisfy a given goal, which is formulated as an objective function $\mathcal{L}_g$. 
We define $\mathcal{L}_g(\tau) = \mathds{1}(D(\tau)<\epsilon)$, where $D(\tau)$ represents the minimal distance between the autonomous vehicle and other objects and $\epsilon$ is a small threshold.
Therefore, the optimization is to solve the problem $\max_{{\phi}} \mathbb{E}_{\gG^B\sim  p_{\phi}(\gG^B|\gG^C)}[\mathcal{L}_g(\gE(\gG^B))]$. 
Usually, $\mathcal{L}_g$ contains non-differentiable operators (e.g., complicated simulation and rendering), thus we have to utilize black-box optimization methods to solve the problem. We consider a policy gradient algorithm named REINFORCE~\citep{williams1992simple}, which obtains the estimation of the gradient from samples by
\begin{equation}
\begin{split}
    \nabla_{\phi} \mathbb{E}_{\gG^B\sim  p_{\phi}(\gG^B|\gG^C)}[\mathcal{L}_g(\gE(\gG^B))] = \mathbb{E}[\nabla_{\phi} \log p(\gG^B|\gG^C) \mathcal{L}_g (\gE(\gG^B))] 
\end{split}
\label{objective}
\vspace{-4mm}
\end{equation}
Overall, the entire training algorithm is summarized in \textbf{Algorithm}~\ref{algorithm1}. In addition, we can prove that the \textit{CausalAF} guarantees monotonicity of likelihood in Theorem~\ref{theorem:monotonicity} at convergence. The detail of the proof is given in Appendix~\ref{proof:monotonicity}.
\begin{theorem} [Monotonicity of Likelihood]
\label{theorem:monotonicity}
Given the true causal graph ${\gG^C}^*=({V^C}, {E^C}^*)$ and distance SHD~\cite{acid2003searching}, for CG $\gG^C_1=(V^C, E_1^C)$ and $\gG^C_2=(V^C, E^C_2)$, if $SHD(\gG_1^C, {\gG^C}^*) < SHD(\gG_2^C, {\gG^C}^*)$, and $\exists\ e$, s.t. $E_1^C\cup \{e\} = E_2^C$, 
\textit{CausalAF} converges with the monotonicity of likelihood for collision samples, i.e. $p_\phi(D(\tau)<\epsilon\ |\ \gG^C_2) < p_\phi(D(\tau)<\epsilon\ |\ \gG^C_1)  < p_\phi(D(\tau)<\epsilon\ |\ {\gG^C}^*) $.
\end{theorem}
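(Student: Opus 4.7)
The plan is to prove the inequalities in three stages: (i) characterize what CausalAF actually converges to under REINFORCE when the reward is the indicator $\mathcal{L}_g(\tau)=\mathds{1}(D(\tau)<\epsilon)$, (ii) show that the support of the converged distribution $p_\phi(\gG^B\mid \gG^C)$ is governed by the masks COM and CVM and therefore by $\gG^C$ itself, and (iii) use the hypothesis $E^C_1\cup\{e\}=E^C_2$ together with the SHD inequality to identify $e$ as a spurious edge relative to ${\gG^C}^*$ and conclude.

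First, since $\mathcal{L}_g$ is binary, the objective in Eq.~\ref{objective} is exactly $p_\phi(D(\tau)<\epsilon\mid \gG^C)$. Assuming sufficient capacity of $\gF_\phi$ and the standard Robbins--Monro conditions for REINFORCE, the iterates converge to a stationary distribution whose mass is concentrated on the collision preimage $\gE^{-1}(\{\tau:D(\tau)<\epsilon\})$, intersected with the feasible set $\mathcal{S}(\gG^C)$ allowed by COM/CVM. Denote the limiting collision probability by $p^\star(\gG^C)$; it equals the maximum $p_\phi$-measure of collision-producing BGs achievable over distributions supported on $\mathcal{S}(\gG^C)$.

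Second, I would prove a mask-monotonicity lemma. By construction, COM forces every realized $\gG^B$ to admit a topological ordering compatible with $\gG^C$, and CVM zeros out, during generation of edges into node $i$, every feature of nodes $j\notin \textbf{PA}_i(\gG^C)$. Hence $\mathcal{S}(\gG^C)$ is the set of BGs whose induced ordering and edge-incidence pattern is consistent with $\gG^C$. For the single added edge $e=(u,v)\in E^C_2\setminus E^C_1$, any $\gG^B\in \mathcal{S}(\gG^C_1)$ violating the forced ordering ``$u$ before $v$'' is excluded from $\mathcal{S}(\gG^C_2)$, giving $\mathcal{S}(\gG^C_2)\subsetneq \mathcal{S}(\gG^C_1)$. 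The hypothesis $SHD(\gG^C_1,{\gG^C}^*)<SHD(\gG^C_2,{\gG^C}^*)$ combined with $E^C_1\cup\{e\}=E^C_2$ forces $e\notin {E^C}^*$, i.e., $e$ is spurious. Consequently, the true causal mechanism places positive density on collision-producing BGs that realize the opposite ordering of $u,v$ (or independent ordering) and such BGs lie in $\mathcal{S}(\gG^C_1)\setminus \mathcal{S}(\gG^C_2)$.

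Third, I would combine the pieces. Because the converged REINFORCE distribution puts positive mass on every feasible collision BG in its support, the strictly richer feasible set $\mathcal{S}(\gG^C_1)$ yields $p^\star(\gG^C_2)<p^\star(\gG^C_1)$. The upper inequality $p^\star(\gG^C_1)<p^\star({\gG^C}^*)$ follows by iterating the same one-edge argument along a finite chain of single-edge corrections transforming $\gG^C_1$ into ${\gG^C}^*$: each correction is either the removal of a spurious edge (reducing ordering constraints) or the addition of a true edge (unmasking genuinely causal parents through CVM, allowing the flow to condition on features that discriminate collision-inducing configurations), and in either direction SHD strictly decreases and $p^\star$ strictly increases by the same support-expansion reasoning.

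The main obstacle is the strictness claim inside the lemma: showing that the BGs excluded from $\mathcal{S}(\gG^C_2)$ by the spurious ordering constraint carry strictly positive collision mass at convergence, rather than being measure-zero configurations the model could replicate through other orderings. I would handle this by appealing to the assumption that ${\gG^C}^*$ is the \emph{true} SCM underlying the data-generating mechanism, so the marginal of collision-inducing configurations under ${\gG^C}^*$ cannot be supported on the strictly smaller ordering-restricted set $\mathcal{S}(\gG^C_2)$ without contradicting the definition of $e$ as spurious. This structural argument, together with the continuity of $\gF_\phi$ in the one-hot relaxation of Eq.~\ref{onehot}, yields the strict inequalities and hence the monotonicity claim.
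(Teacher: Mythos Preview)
The paper's appendix contains only an empty \texttt{lemma} environment and an empty \texttt{proof} environment for Theorem~\ref{theorem:monotonicity}; there is no actual proof in the paper to compare against. So I can only assess your proposal on its own merits.

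Your argument has two structural gaps that I do not see how to close as written.

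\textbf{The mask-monotonicity lemma is not a clean subset relation.} You argue that adding the spurious edge $e=(u,v)$ to $\gG^C_1$ yields $\mathcal{S}(\gG^C_2)\subsetneq\mathcal{S}(\gG^C_1)$ because COM now forces ``$u$ before $v$.'' But CVM acts in the \emph{opposite} direction: once $u\in\textbf{PA}_v(\gG^C_2)$, the features of $u$ are \emph{un}masked when generating the edges of $v$, so the conditioning set for $v$ under $\gG^C_2$ is strictly larger than under $\gG^C_1$. Thus the map $\gG^C\mapsto\mathcal{S}(\gG^C)$ is not monotone in the edge set in the way your lemma requires; adding an edge simultaneously shrinks the admissible orderings (via COM) and enlarges the information available to $\gF_\phi$ (via CVM). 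Your step~(iii) then reuses this inconsistency explicitly: for removing a spurious edge you invoke support expansion, while for adding a true edge you switch to an information-gain argument through CVM---but adding a true edge also \emph{adds} an ordering constraint through COM, which by your own logic should shrink support. The two mechanisms are entangled and you have not shown which dominates.

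\textbf{The converged value of $p^\star(\gG^C)$ is not pinned down.} You assume ``sufficient capacity of $\gF_\phi$'' and conclude that REINFORCE concentrates mass on the collision preimage intersected with $\mathcal{S}(\gG^C)$. But with sufficient capacity and a binary reward, the optimizer of $\mathbb{E}[\mathcal{L}_g]$ is any distribution supported entirely on collision-producing BGs, and the optimal objective value is $1$ as soon as $\mathcal{S}(\gG^C)$ contains \emph{any} collision-producing BG. Under that reading $p^\star(\gG^C_2)=p^\star(\gG^C_1)=p^\star({\gG^C}^*)=1$ and the strict inequalities collapse. To get a nontrivial comparison you need either (a) a quantitative statement about convergence \emph{rate} or sample complexity rather than the limit, or (b) an explicit capacity or entropy constraint on $p_\phi$ that prevents degeneration---neither of which is stated in the theorem or supplied by your proposal. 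The ``structural argument'' you offer for strictness (that the true SCM cannot be supported on the ordering-restricted set) does not address this, because the theorem is about the optimized flow $p_\phi$, not about the data-generating measure.

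Without resolving these two points, the chain of strict inequalities does not follow.
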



\newcommand\mycommfont[1]{\small\ttfamily\textcolor{LimeGreen}{#1}}
\SetCommentSty{mycommfont}
\begin{wrapfigure}{r}{0.5\textwidth} 
\vspace{-14mm}
\begin{algorithm}[H]
\caption{Training process of CausalAF}
\label{algorithm1}
\KwIn{Causal Graph $\gG^C$, Goal $\mathcal{L}_g$, Learning rate $\alpha$, Maximum node number $m$}
\While{$\phi$ not converged}
{
\tcp{Sample a BG $\gG^B \sim p_{\phi}(\gG^B|\gG^C)$} 
\For{$i < m$}
{
    Sample node matrix $V^B_{[i]}$ by (\ref{generation_1})\\
    Get node type $V^B_{[i]}$ by (\ref{onehot}) \\
    Apply COM $M^{0, i}$ to $V^B_{[i]}$\\
    Apply CVM $M^{1, i}$, $M^{2, i}$ to $V^B_{[i]}$, $E^B_{[i,j]}$\\
    \For{$j \leq i$}
    {
        Sample edge matrix $E^B_{[i,j]}$ by (\ref{generation_1})\\
        Get edge type $E^B_{[i,j]}$ by (\ref{onehot}) \\
    }
}
Collect one scenario $\gG^B = \{ V^B, E^B \} $\\
\tcp{Learn model parameters}
Calculate the likelihood $p_{\phi}(\gG^B|\gG^C)$  \\
Execute $\tau = \gE(\gG^B)$ and get $\mathcal{L}_g(\tau)$ \\
Use (\ref{objective}) to update ${\phi} \gets {\phi} - \alpha \nabla_{\phi} \mathcal{L}_g(\tau)$
}
\end{algorithm}
\vspace{-10mm}
\end{wrapfigure}

\subsection{Scenario Sampling and Execution}
Thanks to the autoregressive generation of CausalAF, we are able to conduct generation conditioned on arbitrary numbers or types of nodes. 
Instead of generating from the scratch, we can start from an existing $\gG^B_c$ for the generation with $\gG^B \sim p_{\phi}(\cdot |\gG^B_c, \gG^C)$.
The conditional generation can be used for interactive scenarios, e.g., using the autonomous vehicle's information or the data of partial scenarios in the real world as conditions to generate diverse and realistic scenarios.
After sampling the scenarios, the physical properties (e.g., position and velocity) defined in the generated $\gG^B$ are executed in the simulator $\gE$ to create sequential scenarios $\tau$.
After the execution, the simulator outputs the objective function ${L}_g(\tau)$ as the result.

\section{Experiment}

We evaluate \textit{CausalAF} using three top pre-crash traffic scenarios defined by U.S. Department of Transportation~\citep{najm2013description} and Euro New Car Assessment Program~\citep{van2016european}. 
Our empirical results show that it may not be trivial for the generative models to learn the underlying causality even if such causality seems understandable to humans.
Particularly, we conduct a series of experiments to answer the following main questions:
\question{Q1}: How does CausalAF perform compared to other scenario generate methods?
\question{Q2}: How does causality help the generation process?
\question{Q3}: How can we use the generated safety-critical scenarios?
In this section, we will first introduce the designed environment and baseline methods. Then we will answer the above questions by carefully investigating the experiment results.

\begin{figure}
  \centering
  \includegraphics[width=0.95\textwidth]{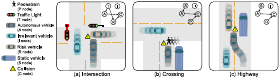}
  \vspace{-3mm}
  \caption{Three causal traffic scenarios are used in our experiments. The corresponding causal graphs are shown in the upper right of each scenario. Please refer to Section~\ref{sec-simulator-scenes} for details.}
  \label{simulation}
  \vspace{-1mm}
\end{figure}

\newcommand{\highest}[1]{{\textbf{{#1}}}}
\begin{table}[t]
\caption{Collision rate ($\uparrow$) of generated safety-critical scenarios. \highest{Bold} font means the best.}
\vspace{-2mm}
\label{tab:reward}
\centering
\small{
  \begin{tabular}{c|p{1.1cm}p{1.1cm}p{1.1cm}p{1.7cm}|p{1.1cm}p{1.7cm}p{1.1cm}}
    \toprule
    Method        & L2C~\cite{ding2020learning} & MMG~\cite{ding2021multimodal} & SAC~\cite{haarnoja2018soft} & \rebuttal{STRIVE~\cite{rempe2022generating}} & Baseline & \footnotesize{Baseline+COM} & CausalAF \\
    \midrule 
    Intersection & 0.63$\pm$0.28 & 0.31$\pm$0.54 & 0.47$\pm$0.61 & \rebuttal{0.64$\pm$0.12} & 0.29$\pm$0.84  & 0.69$\pm$0.52 & \textbf{0.98$\pm$0.01} \\
    Crossing     & 0.69$\pm$0.41 & 0.43$\pm$0.56 & 0.38$\pm$0.49 & \rebuttal{0.55$\pm$0.10} & 0.35$\pm$0.65  & 0.57$\pm$0.48 & \textbf{0.83$\pm$0.13} \\
    Highway      & 0.85$\pm$0.10 & 0.56$\pm$0.36 & 0.58$\pm$0.41 & \rebuttal{0.67$\pm$0.16} & 0.53$\pm$0.69   & 0.88$\pm$0.04 & \textbf{0.91$\pm$0.06} \\
    \bottomrule
  \end{tabular}
}
\vspace{-4mm}
\end{table}

\subsection{Experiment Design and Setting}
\label{sec-simulator-scenes}

\textbf{Scenario.} 
We consider three safety-critical traffic scenarios (shown in Figure~\ref{simulation}) that have clear causation. The causal graph $\gG^C$ for each scenario is displayed on the upper right of the scenario. 
\begin{itemize}[leftmargin=0.2in]
    \item \textbf{Intersection}. 
    One potential safety-critical event could happen when the traffic light $T$ turns from green to yellow to give the road right to an autonomous vehicle $A$. Here, $A$ and $R$ are influenced by $T$. $R$ runs the red light, colliding with $A$ perpendicularly, therefore, causing the collision $C$ together with $A$. $I$ does not influence other objects.
    \vspace{-1mm}
    \item \textbf{Crossing}. 
    A pedestrian $P$ and an autonomous vehicle $A$ are crossing the road in vertical directions. There also exists a static vehicle $S$ parked by the side of the road. Then a potentially risky scenario could happen when $S$ blocks the vision of $A$. In this scenario, $S$ is the parent of $A$, and $P$ and $A$ cause the collision $C$. $I$ does not influence other objects.
    \vspace{-1mm}
    \item \textbf{Highway}. 
    An autonomous vehicle $A$ takes a lane-changing behavior due to a static car $S$ parked in front of it. Meanwhile, a vehicle $R$ drives in the opposite lane. Since $S$ blocks the vision of $A$, $A$ is likely to collide with $R$. In this scenario, $S$ is the parent of $A$, and $R$ and $A$ cause the collision $C$. $I$ does not influence other objects.
\end{itemize}

\textbf{Simulator.}
We implement the above scenarios in a 2D simulator, where all agents have radar sensors and are controlled by a simple vehicle dynamic. 
During the running, the autonomous vehicle is controlled by a rule-based policy, which will decelerate if it detects any obstacles in front of it within a certain range
Thus, the safety-critical scenario will not happen unless the radar of one agent is blocked and the distance is smaller than the braking distance, avoiding the creation of unrealistic scenarios.
The action space contains the acceleration and steering of all objects, and the state space contains the position and heading of all objects and the status of traffic lights if applicable.

\textbf{Baselines.}
We consider 7 algorithms as baselines, including 5 scenario generation methods and 2 variants of our \textit{CausalAF}. 
Learning to collide (L2C)~\cite{ding2020learning} uses a Bayesian network to describe the relationship between objects. Multi-modal Generation (MMG)~\cite{ding2021multimodal} uses an adaptive sampler to increase sample diversity. \rebuttal{STRIVE~\cite{rempe2022generating} learns traffic prior from datasets and uses adversarial optimization to generate risk scenarios.}
SAC is a standard RL algorithm using the objective as the reward function.
To further investigate the contribution of COM and CVM, we design two variants that share the same network structure as \textit{CausalAF}. Baseline does not use COM or CVM, and Baseline+COM only uses COM.

\begin{figure}
  \centering
  \includegraphics[width=1.0\textwidth]{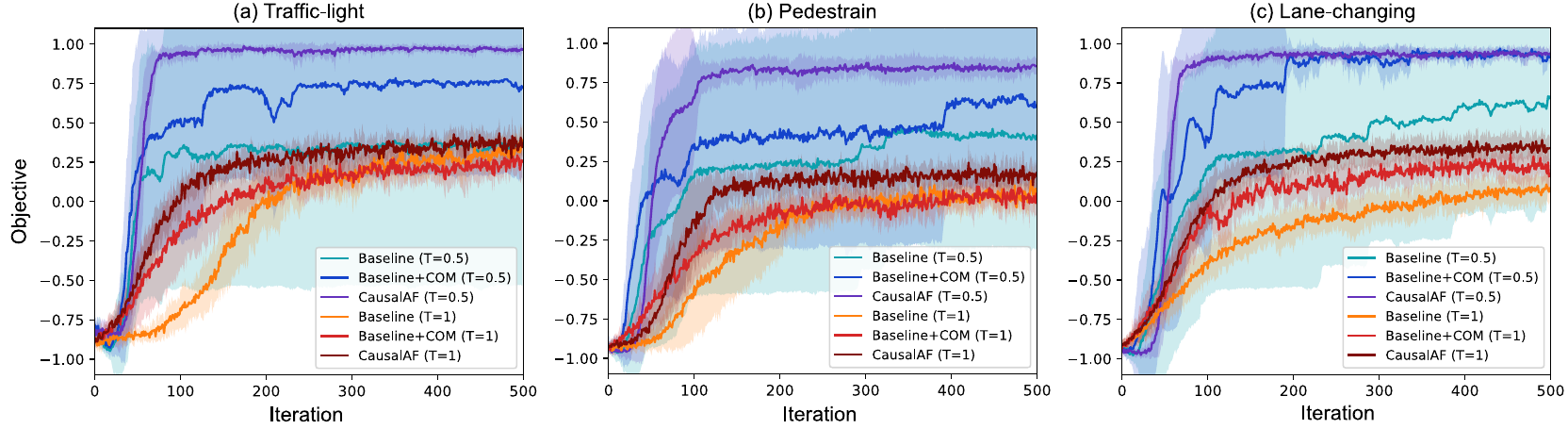}
  \vspace{-6mm}
  \caption{Training objective $\mathcal{L}_g(\gG^B)$ of \textit{CausalAF} and two variants under two sampling temperatures. The higher the sampling temperature is, the more diverse the generated scenarios are.}
  \label{reward_5_10}
  \vspace{-2mm}
\end{figure}

\begin{table}[h]
\caption{\rebuttal{Collision rate of RL algorithms evaluated in different scenarios.}}
\vspace{-2mm}
\label{tab:evaluation}
\centering
\footnotesize{
  \begin{tabular}{p{1.8cm}|p{0.65cm}p{0.4cm}p{0.5cm}p{0.6cm}|p{0.65cm}p{0.4cm}p{0.5cm}p{0.6cm}|p{0.65cm}p{0.4cm}p{0.5cm}p{0.5cm}}
    \toprule
    \multirow{2}{*}{Method}  & \multicolumn{4}{c|}{Intersection} & \multicolumn{4}{c|}{Crossing} & \multicolumn{4}{c}{Highway} \\
                       & Norm  & L2C  & MMG  & Ours          & Norm  & L2C  & MMG  & Ours          & Norm  & L2C  & MMG  & Ours \\
    \midrule
    \textcolor{LimeGreen}{SAC-Norm}    & 0.05  & 0.57 & 0.64 & {0.91} & 0.04  & 0.54 & 0.67 & {0.92} & 0.03  & 0.79 & 0.75 & {0.95} \\
    \textcolor{red}{SAC-Ours}          & 0.01  & 0.03 & 0.04 &  0.08 & 0.00  & 0.04 & 0.06 & 0.11 & 0.02  & 0.01 & 0.04 & 0.09 \\
    \midrule
    \textcolor{LimeGreen}{PPO-Norm}    & 0.07  & 0.44 & 0.48 & {0.86} & 0.03  & 0.53 & 0.61 & {0.80} & 0.02  & 0.62 & 0.64 & {0.92} \\
    \textcolor{red}{PPO-Ours}          & 0.00  & 0.04 & 0.01 &  0.12 & 0.02  & 0.03 & 0.03 & 0.08 & 0.01  & 0.02 & 0.03 & 0.13 \\
    \midrule
    \textcolor{LimeGreen}{DDPG-Norm}   & 0.12  & 0.76 & 0.62 & {0.89} & 0.07  & 0.71 & 0.76 & {0.85} & 0.04  & 0.72 & 0.61 & {0.95}  \\
    \textcolor{red}{DDPG-Ours}         & 0.01  & 0.02 & 0.05 &  0.13 & 0.02  & 0.01 & 0.04 & 0.12 & 0.03  & 0.03 & 0.03 & 0.16  \\
    \midrule
    \textcolor{LimeGreen}{MBRL-Norm}   & 0.04  & 0.78 & 0.74 & {0.98} & 0.05  & 0.68 & 0.85 & {0.97} & 0.05  & 0.79 & 0.87 & {0.98}  \\
    \textcolor{red}{MBRL-Ours}         & 0.00  & 0.01 & 0.01 &  0.07 & 0.00  & 0.01 & 0.02 & 0.09 & 0.00  & 0.03 & 0.01 & 0.10  \\
    \bottomrule
  \end{tabular}
}
\vspace{-7mm}
\end{table}

\subsection{Results Discussion}

\textbf{How does \textit{CausalAF} perform on safety-critical scenario generation? (\question{Q1})}
We train all generation methods in 3 environments and report the final objective values in Table~\ref{tab:reward}. We observe that \textit{CausalAF} achieves the best performance among all methods. L2C performs better than MMG and SAC because it also considers the structure of the scenario. We also notice that both Baseline and Baseline+COM have performance drops compared to \textit{CausalAF}, indicating that the COM and CVM modules contribute to the autoregressive generating process. 
Baseline+COM performs a little better than Baseline, which validates our hypothesis that COM is not powerful enough to represent causality.
To investigate the training procedure, we plot the training objectives in \rebuttal{Figure}~\ref{reward_5_10} with two different sampling temperatures $T$, which controls the sampling variance in $\epsilon \sim \mathcal{N}(0, T)$.
A large temperature provides strong exploration but causes slow convergence. 
However, we find that using a small temperature leads to unstable training with high variance due to poor exploration capability.

\begin{wrapfigure}{r}{0.4\textwidth} 
  \centering
  \vspace{-5mm}
  \includegraphics[width=0.4\textwidth]{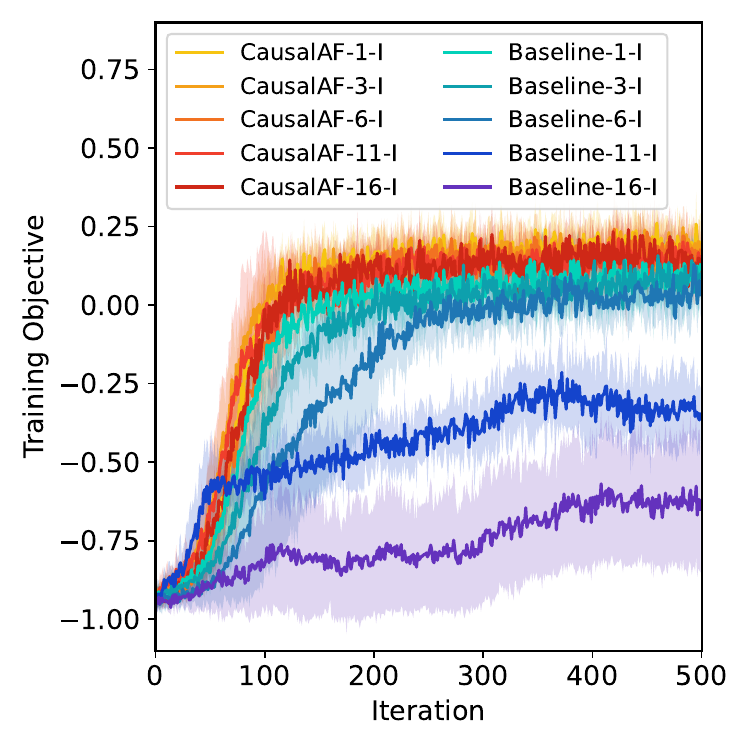}
  \vspace{-7mm}
  \caption{The training objectives in the Pedestrian scenario from different numbers of irrelevant vehicles.}
  \label{reward_node_number}
  \vspace{-5mm}
\end{wrapfigure}

\textbf{How does causality help the generation process? (\question{Q2})}
The design of the Baseline represents the model that uses the full graph. Therefore, the results in Table~\ref{tab:reward} also demonstrate that the causal graph is more helpful than the full graph. To investigate the reason why the causal graph helps the learning, we conduct an ablation study on the number of irrelevant nodes ($I$ node), which do not have edges in the causal graph. 
In Figure~\ref{reward_node_number}, we can see that adding more irrelevant vehicles enlarges the gap between \textit{CausalAF} and Baseline -- the performance of Baseline gradually drops as the number of $I$ nodes increases but \textit{CausalAF} has consistent performance. The reason is that \textit{CausalAF} is able to diminish the impact of irrelevant information with COM and CVM.

\textbf{How can we use the generated scenarios? (\question{Q3})}
\rebuttal{Finally, we explore how to use generated safety-critical scenarios.
We train 4 RL agents (\{SAC, PPO, DDPG, MBRL\}-Norm) under normal scenarios (uniformly sample the parameters of objects in the scenario) then we evaluate them under scenarios generated by four different methods: Normal, L2C, MMG, and Ours (\textit{CausalAF}) to test the performance under safety-critical scenarios. We also train another 4 agents under scenarios generated by our method (\{SAC, PPO, DDPG, MBRL\}-Ours) and evaluate under four different scenarios.
We report the collision rate in Table~\ref{tab:evaluation}. We find that scenarios generated by our CausalAF cause more collision to the RL agents, which also shows that training on normal scenarios is not enough for safety. After training on scenarios generated by CausalAF, the agents achieve lower collision in all scenarios, indicating the usefulness of training on safety-critical scenarios.
}

\vspace{-3mm}
\section{Related Work}
\vspace{-3mm}

\textbf{Goal-directed generative models.}
DGMs, such as Generative Adversarial Networks \citep{goodfellow2014generative} and Variational Auto-encoder \citep{kingma2013auto}, have shown powerful capability in randomly data generation tasks~\citep{brock2018large}. 
Among them, goal-directed generation methods are widely used~\citep{mollaysa2020goal}. 
One line of research leverages conditional GAN~\citep{mirza2014conditional} and conditional VAE~\citep{sohn2015learning}, which take as input the conditions or labels during the training stage. 
Another line of research injects the goal into the model after the training. 
\citep{engel2017latent} proposes a latent space optimization framework that finds the samples by searching in the latent space. 
This spirit is also adopted in other fields: \citep{mollaysa2019conditional} finds the molecules that satisfy specific chemical properties, \citep{abdal2020image2stylegan2} searches in the latent space of StyleGAN \citep{karras2019style} to obtain targeted images.
Recent works combine the advantages of the above two lines by iteratively updating the high-quality samples and retraining the model weights during the search~\citep{tripp2020sample}.
\citep{shi2020graphaf} pre-trains the generative model and optimizes the sample distribution with reinforcement learning algorithms. 

\textbf{Safety-critical driving scenario generation.}
Traditional scenario generation algorithms sample from pre-defined rules and grammars, such as probabilistic scene graphs~\citep{prakash2019structured} and heuristic rules~\citep{dosovitskiy2017carla}.
In contrast, DGMs~\citep{devaranjan2020meta, tan2021scenegen, ding2018new, ding2020cmts} are recently used to construct diverse scenarios. 
Adversarial optimization is considered for safety-critical scenario generation. \citep{alcorn2019strike, xiao2019meshadv, jain2019analyzing} manipulate the pose of objects in traffic scenarios,
\citep{tu2020physically, abdelfattah2021towards} adds objects on the top of existing vehicles to make them disappear, 
and \citep{ding2021semantically} generates the layout of the traffic scenario with a tree structure integrated with human knowledge. 
Another direction generates risky scenarios while also considering the likelihood of occurring of the scenarios in the real world. \citep{7534875, o2018scalable, pmlr} used various importance sampling approaches to generate risky but probable scenarios. \citep{ding2020cmts} merges the naturalistic and collision datasets with \rebuttal{conditional VAE}.
\rebuttal{\citep{9355111, rempe2022generating, wang2021advsim, hanselmann2022king} learn traffic prior from pre-collected dataset.}

\textbf{Causal generative models.}
The research of causality~\citep{pearl2009causality} is usually divided into two aspects: causal discovery finds the underlying mechanism from the data; causal inference extrapolates the given causality to solve new problems.
A toolbox named NOTEARs is proposed in \citep{zheng2018dags} to learn causal structure in a fully differentiable way, which drastically reduces the complexity caused by combinatorial optimization. 
\citep{heckerman1995learning} show the identifiability of learned causal structure from interventional data, which is obtained by manipulating the causal system under interventions.
Recently, causality has been introduced into DGMs to learn the cause and effect with representation learning.
CausalGAN~\citep{kocaoglu2017causalgan} captures the causality by training the generator with the causal graph as a prior, which is very similar to our setting.
In CausalVAE~\citep{yang2021causalvae}, the authors disentangle latent factors by learning a causal graph from data and corresponding labels.
Previous work CAREFL~\citep{khemakhem2021causal} also explored the combination of causation and autoregressive flow-based model and is used for causal discovery and prediction tasks.

\vspace{-3mm}
\section{Conclusion and Limitation}
\vspace{-3mm}

This paper proposes a causal generative model that generates safety-critical scenarios with causal graphs obtained from humans prior. 
To incorporate the causality into the generation, we use the causal graph to regularize the generation of the behavioral graph, which is achieved by modifying the generating ordering and graph connection with two causal masks. 
By injecting causality into generation, we efficiently create safety-critical scenarios that are too rare to find in the real world.
The experiment results on three environments with clear causality demonstrate that \textit{CausalAF} outperforms all baselines in terms of efficiency and performance.
We also show that training on our generated safety-critical scenarios improves the robustness of RL-based driving algorithms.
\rebuttal{The proposed method can be naturally extended to other robotics areas since critical scenarios are vital for learning-based algorithms but rare to collect in the real world, e.g., risky scenarios for household robots that involve human interaction.}

The main limitation of this work is that the causal graph, summarized by humans, is assumed to be always correct, which may not be true for complicated scenarios. \rebuttal{We will explore methods robust to human bias when attaining the causal graph, for example, automatically discovering causal graphs from observational or interventional datasets. Although this work is evaluated in simulations, we believe the autonomous driving area still benefits from safety-critical scenarios with abstracted representation, which shares a smaller sim-to-real gap compared to directly using raw sensor input.}

\clearpage
\acknowledgments{We gratefully acknowledge support from the National Science Foundation under grant CAREER CNS-2047454 and support from the Manufacturing Futures Initiative at Carnegie Mellon University made possible by the Richard King Mellon Foundation.}


\bibliography{corl_2022}  

\clearpage
\appendix

\renewcommand{\cftpartleader}{\cftdotfill{\cftdotsep}} 
\renewcommand{\cftsecleader}{\cftdotfill{\cftdotsep}} 
\addcontentsline{toc}{section}{Appendix} 
\part{Appendix} 
\parttoc 

\section{Theoretical Proofs}
\begin{definition}[Structural Hamming Distance (SHD)]
\label{shd}
For any two DAGs $\gG_1^C, \gG_2^C$ with identical vertices set $V$, we define the following function SHD: $\gG\times \gH\to \mathbb{R}$, 
\begin{equation}
    \begin{aligned}
    \text{SHD}(\gG_1^C, \gG_2^C) &= \#\{(i, j)\in V^2 \mid \gG_1^C \text{ and } \gG_2^C \text{ have different edges } e_{ij}\} \\
    & \overset{\Delta}{=} \sum_{j\in V} |\textbf{PA}_j(\gG_1^C) - \textbf{PA}_j(\gG_2^C)|
    \end{aligned}
\end{equation}
where $|\textbf{PA}_j(\gG_1^C) - \textbf{PA}_j(\gG_2^C)|$ is the number of the absolute difference in parental nodes for node j between causal graph $\gG_1^C$ and $\gG_2^C$.
\end{definition}

\begin{definition}[Nodes in Behavior Graph]
Let $X_j=\Big[V_j, \{E_{ij}\}_{i\in \{\textbf{PA}_j(\gG^C)\cup j\} }  \Big]$, where $V_i$ is the node type of the j-th node, and $E_{\cdot i}$ is the arrows that point in the j-th node. All these components form the node $X_j$ in the behavior graph. 
\end{definition}
\begin{definition}[Respect the graph]
For any given behavior graph $\gG^B$ with a specific causal graph $\gG^C$, the transition model respects the graph if the distribution $p_\phi(\gG^B | \gG^C)$ can be factorized as:

\begin{equation}
\begin{aligned}
p(\gG^B|\gG^C) & =\prod_{j\in [m]} p(X_j | \textbf{PA}_j(\gG^C))
\end{aligned}
\label{def_respect_graph}
\end{equation}

where $m$ is the number of factorized nodes, and $\textbf{PA}_j(\cdot)$ is for $X_j$'s parents based on the causal graph.
\end{definition}
\begin{proposition}[CausalAF respects the graph]
\label{prop:respect_graph}

\begin{equation}
\begin{aligned}
p_\phi(\gG^B|\gG^C) &=
\prod_{j\in [m]} \Big[ \underbrace{p_\phi(V_j|\textbf{PA}_j(\gG^C))}_{\text{COM}} \underbrace{p_\phi(E_{jj}|V_j, \textbf{PA}_j(\gG^C)) \prod_{i\in \textbf{PA}_j(\gG^C)} p_\phi(E_{ij} | V_j, \textbf{PA}_j(\gG^C))}_{\text{CVM}} \Big] \\
& = \prod_{j\in [m]} \Big[ p_\phi(V_j, E_{jj}|\textbf{PA}_j(\gG^C)) \prod_{i\in \textbf{PA}_j(\gG^C)} p_\phi(E_{ij} | V_j, \textbf{PA}_j(\gG^C))\Big] \\
& = \prod_{j\in [m]} p_\phi(V_j, \{E_{ij}\}_{i\in \{\textbf{PA}_j(\gG^C)\cup j\} } | \textbf{PA}_j(\gG^C)) \\
& =\prod_{j\in [m]} p_\phi(X_j | \textbf{PA}_j(\gG^C))\\
\end{aligned}
\label{respect_graph}
\end{equation}

The node generation process of CausalAF combines two phases: firstly, we use COM to determine the generation order of the node, which prevents the generation of child nodes before their parent nodes. This COM can also be interpreted as a node ordering with topological sorting, therefore CausalAF should always respect the term $p(V_j|\textbf{PA}_j(\gG^C)), \forall j$ in Equation~(\ref{respect_graph}).

On the other hand, CVM is used to guarantee that the output of the autoregressive flow model uses proper structural information (i.e. the parents of the current node) to generate the self-loop edge as well as edges between new nodes and their parents accordingly, the CVM trick thus guarantees that CausalAF respects the term $p(E_{jj}|V_j, \textbf{PA}_j(\gG^C)) \prod_{i\in \textbf{PA}_j(\gG^C)} p(E_{ij} | V_j, \textbf{PA}_j(\gG^C)), \forall j$ in Equation~(\ref{respect_graph}).





\end{proposition}

\begin{assumption}[Local Optimality]
Let ${\gG^C}^*$ be the ground truth causal graph, 
for any nodes $X_j$ with its parental set $\textbf{PA}_j(\gG_1^C)\neq \textbf{PA}_j({\gG^C}^*)$.
At convergence, CausalAF will have $\max_\phi p_\phi(V_j | \textbf{PA}_j({\gG^C}^*)) > \max_\phi p_\phi(V_j | \textbf{PA}_j(\gG_1^C))$.
\end{assumption}

\begin{assumption}[Local Monotonicity of Behavior Graph]
For a single node $X_j$, its local monotonicity of likelihood means for any conditional set $\textbf{PA}_j(\gG_1^C), \textbf{PA}_j(\gG_2^C) \neq \textbf{PA}_j(\gG^C)$, if $ | \textbf{PA}_j(\gG_1^C)- \textbf{PA}_j(\gG^C)| < |\textbf{PA}_j(\gG_2^C)- \textbf{PA}_j(\gG^C)|$, and $\exists\ v$, s.t. $\textbf{PA}_j(\gG_2^C)\cup v = \textbf{PA}_j(\gG_1^C)$,  then 
$\max_\phi p_\phi(X_j | \textbf{PA}_j(\gG_1^C)) > \max_\phi p_\phi(X_j | \textbf{PA}_j(\gG_2^C))$
\label{local_monotonicity}
\end{assumption}


\begin{proof}[Proof of Theorem~\ref{theorem:monotonicity}]
\label{proof:monotonicity}
Given that $\gG^B\sim p_\phi (\gG^B|\gG^C), \tau=\gE(\gG^B)$, by using the change of variable theorem, we have $\tau\sim p_\phi (\gE^{-1}(\tau)|\gG^C) |\det \frac{\partial \gE^{-1}(\tau)}{\partial \tau} | \overset{\Delta}{=} \hat{p}_\phi(\tau|\gG^C)$.

The optimization process of CausalAF can be rewritten as below:
\begin{equation}
\begin{aligned}
    \max_{\phi}\ & \mathbb{E}_{\gG^B\sim  p_{\phi}(\gG^B|\gG^C)}[\mathds{1}(D(\gE(\gG^B)])<\epsilon) \\
    & = \max_{\phi} \mathbb{E}_{\hat{p}_\phi(\tau|\gG^C)} [\mathds{1}(D(\tau)<\epsilon)] \\
    & = \max_{\phi} \hat{p}_\phi (D(\tau)<\epsilon | \gG^C)\\
    & = \max_{\phi} \hat{p}_\phi (\gG^B\in \gA | \gG^C),  \text{  where } \gA=\{\gG^B| D(\gE(\gG^B))<\epsilon \}
\end{aligned}
\end{equation}

Since the CausalAF respects the graph, as is shown in Proposition~\ref{prop:respect_graph}, for true CG ${\gG^C}^*$ and another CG $\gG_1^C\neq {\gG^C}^*$. By applying the local monotonicity in the previous assumptions, when CausalAF converges, we will have
\begin{equation}
    \begin{aligned}
    \hat{p}_\phi (\gG^B\in \gA | \gG_1^C) &= \prod_{j} \hat{p}_\phi (X_j\in \gA_j | \textbf{PA}_j(\gG_1^C)) \\
    & = \mathop{\prod_{\forall j, s.t.\ \atop \textbf{PA}_j (\gG_1^C)=\textbf{PA}_j ({\gG^C}^*)}}  \hat{p}_\phi (X_j\in \gA_j| \textbf{PA}_j (\gG_1^C)) \mathop{\prod_{\forall j, s.t.\ \atop \textbf{PA}_j (\gG_1^C)\ne \textbf{PA}_j ({\gG^C}^*)}}
    \hat{p}_\phi (X_j\in \gA_j| \textbf{PA}_j (\gG_1^C)) \\
    & < \mathop{\prod_{\forall j, s.t.\ \atop \textbf{PA}_j (\gG_1^C)=\textbf{PA}_j ({\gG^C}^*)}} \hat{p}_\phi (X_j\in \gA_j | \textbf{PA}_j ({\gG^C}^*)) \mathop{\prod_{\forall j, s.t.\ \atop \textbf{PA}_j (\gG_1^C)=\textbf{PA}_j ({\gG^C}^*)}} \hat{p}_\phi (X_j\in \gA_j | \textbf{PA}_j ({\gG^C}^*)) \\
    & = \prod_{j} \hat{p}_\phi (X_j\in \gA_j | \textbf{PA}_j({\gG^C}^*)) \\
    & = \hat{p}_\phi (\gG^B\in \gA | {\gG^C}^*)
    \end{aligned}
\end{equation}

Then we assume we have another Causal Graph $\gG_2^C\ne \gG_1^C$, if $SHD(\gG_1^C, {\gG^C}^*) < SHD(\gG_2^C, {\gG^C}^*)$, and $\exists\ e$, s.t. $E_1^C\cup \{e\} = E_2^C$, 
\begin{equation}
    \begin{aligned}
    \hat{p}_\phi (\gG^B\in \gA | \gG_2^C) &= \prod_{j} \hat{p}_\phi (X_j\in \gA_j | \textbf{PA}_j(\gG_2^C)) \\
    & = \mathop{\prod_{\forall j, s.t.\ \atop \textbf{PA}_j (\gG_1^C)=\textbf{PA}_j ({\gG_2^C})}}  \hat{p}_\phi (X_j\in \gA_j| \textbf{PA}_j (\gG_2^C)) \mathop{\prod_{\forall j, s.t.\ \atop \textbf{PA}_j (\gG_1^C)\ne \textbf{PA}_j ({\gG_2^C})}}
    \hat{p}_\phi (X_j\in \gA_j| \textbf{PA}_j (\gG_2^C)) \\
    & < \mathop{\prod_{\forall j, s.t.\ \atop \textbf{PA}_j (\gG_1^C)=\textbf{PA}_j ({\gG_2^C})}} \hat{p}_\phi (X_j\in \gA_j | \textbf{PA}_j ({\gG_1^C})) \mathop{\prod_{\forall j, s.t.\ \atop \textbf{PA}_j (\gG_1^C)=\textbf{PA}_j ({\gG_1^C})}} \hat{p}_\phi (X_j\in \gA_j | \textbf{PA}_j ({\gG_1^C})) \\
    & = \prod_{j} \hat{p}_\phi (X_j\in \gA_j | \textbf{PA}_j({\gG_1^C})) \\
    & = \hat{p}_\phi (\gG^B\in \gA | {\gG_1^C})
    \end{aligned}
\end{equation}

Based on the derivation above, we conclude that $\hat{p}_\phi (\gG^B\in \gA|\gG_2^C) < \hat{p}_\phi (\gG^B\in \gA|\gG_1^C) < \hat{p}_\phi (\gG^B\in \gA|{\gG^C}^*) $, which indicates that at convergence, the likelihood of collision samples converge with monotonicity guarantees:
\begin{equation}
    p_\phi(D(\tau)<\epsilon\ |\ \gG^C_2) < p_\phi(D(\tau)<\epsilon\ |\ \gG^C_1)  < p_\phi(D(\tau)<\epsilon\ |\ {\gG^C}^*)
\end{equation}
\end{proof}

\begin{table}[h]
\renewcommand\arraystretch{1.1}
\caption{Parameters of Environments}
\label{app:environment_param}
\centering
  \begin{tabular}{c|c|c}
    \toprule
    Parameter    & Description & Value   \\
    \midrule
    $S_{ego}$       & number of LiDAR sensor for ego vehicle         & 10      \\
    $S_{other}$       & number of LiDAR sensor for other vehicle        & 0      \\
    $S_{ped}$       & number of LiDAR sensor for pedestrian        & 6      \\
    $M_{ego}$       & maximal range (m) of LiDAR for ego vehicle           & 200      \\
    $M_{other}$       & maximal range (m) of LiDAR for other vehicle           & 200      \\
    $M_{ped}$       & maximal range (m) of LiDAR for pedestrian         & 100      \\
    \midrule
    $D_{ego}$       & braking factor of ego vehicle         & 0.1      \\
    $D_{other}$       & braking factor of other vehicle       & 0.05      \\
    $D_{ped}$       & braking factor of pedestrian         & 0.01      \\
    $W_{ego}$       & shape size (width, length) of ego vehicle          & [20, 40]      \\
    $W_{other}$     & shape size (width, length) of ego vehicle          & [20, 40]      \\
    $W_{ped}$       & shape size (width, length) of ego vehicle          & [15, 15]      \\
    $V_{ego}$       & initial velocity of ego vehicle           & 18     \\
    $V_{other}$       & initial velocity of other vehicle           & 18     \\
    $V_{ped}$       & initial velocity of pedestrian           & 4     \\
    \midrule
    $T_{max}$       & max number of step in one episode           & 100     \\
    $C$              & collision threshold           & 20     \\
    $\Delta_{t}$       & step size of running           & 0.3     \\
    \bottomrule
  \end{tabular}
\end{table}

\section{Environment Details}

\subsection{Simulator}

We conduct all of our experiments in a 2D traffic simulator, where vehicles and pedestrians are controlled by the Bicycle vehicle dynamics. The action is a two-dimensional continuous vector, containing the acceleration and steering.
The ego vehicle is controlled by a constant velocity model and it will decelerate if its Radar detects some obstacles in front of it. All other objects are controlled by the scenario generation algorithm.
The parameters of simulators and 3 environments are summarized in Table~\ref{app:environment_param}.

\subsection{\rebuttal{Definitions of Nodes and Edges in Causal Graph and Behavior Graph}}

\rebuttal{
In our experiments, we pre-define the types of nodes and types for Causal Graph and Behavior Graph, which is summarized in Table~\ref{app:node_edge_definition}. Both of them share the same definition of node types. Causal Graph does not have the type of edges since it only describes the structure.}

\begin{table}[h]
\renewcommand\arraystretch{1.1}
\caption{\rebuttal{Definitions of Nodes and Edges}}
\label{app:node_edge_definition}
\centering
  \begin{tabular}{c|c|c}
    \toprule
    Notation    & Category     & Description    \\
    \midrule
    $n_N$       & Node type    & empty node used as a placeholder in the vector \\
    $n_E$       & Node type    & represents ego vehicle \\
    $n_V$       & Node type    & represents non-ego vehicles  \\
    $n_B$       & Node type    & represents static objects in the scenario \\
    $n_P$       & Node type    & represents pedestrian \\
    \midrule
    $e_N$       & Edge type    & empty edge used as a placeholder in the vector \\
    $e_T$       & Edge type    & the source node go toward the target node \\
    $e_S$       & Edge type    & self-loop edge that does not rely on target node \\
    \midrule
    $e_p$       & Edge attribute    & the initial 2D position of source node relative to target node\\
    $e_v$       & Edge attribute    & the initial velocity of source node relative to target node \\
    $e_a$       & Edge attribute    & the acceleration of source node relative to target node \\
    $e_s$       & Edge attribute    & the shape size of the object in source node \\
    \bottomrule
  \end{tabular}
\end{table}

\section{Model Training Details}

Our model is implemented with PyTorch, using Adam as the optimizer.
All experiments are conducted on NVIDIA GTX 1080Ti and Intel i9-9900K CPU@3.60GHz.
We summarize the parameters of our model in Table~\ref{app:model_param}. Note that the two variant models (Baseline and Baseline+COM) share the same parameters.

\begin{table}[h]
\renewcommand\arraystretch{1.1}
\caption{Parameters of Environments}
\label{app:model_param}
\centering
  \begin{tabular}{c|c|c}
    \toprule
    Parameter    & Description & Value   \\
    \midrule
    $E$       & episode number of REINFORCE         & 500      \\
    $B$       & Batch size of REINFORCE             & 128      \\
    $\alpha$  & learning rate of REINFORCE             & 0.0001      \\
    $T$       &  sample temperature        & 0.5      \\
    \midrule
    $m$       & maximal number of node            & 10      \\
    $n$       & number of node type               & 5      \\
    $n$       & number of node type               & 5      \\
    $h_1$       & number of edge type               & 2      \\
    $h_2$       & number of edge attribute               & 3      \\
    $K$       & number of flow layer               & 2      \\
    $d_{h}$       & dimension of hidden layer               & 128      \\
    \bottomrule
  \end{tabular}
\end{table}

\section{\rebuttal{More Experiment Results}}

\subsection{\rebuttal{Qualitative Results of Generated Scenarios}}

\rebuttal{
We show three qualitative results of generated safety-critical scenarios in Figure~\ref{app:qualitative_results}. 
}

\begin{figure}
  \centering
  \includegraphics[width=1.0\textwidth]{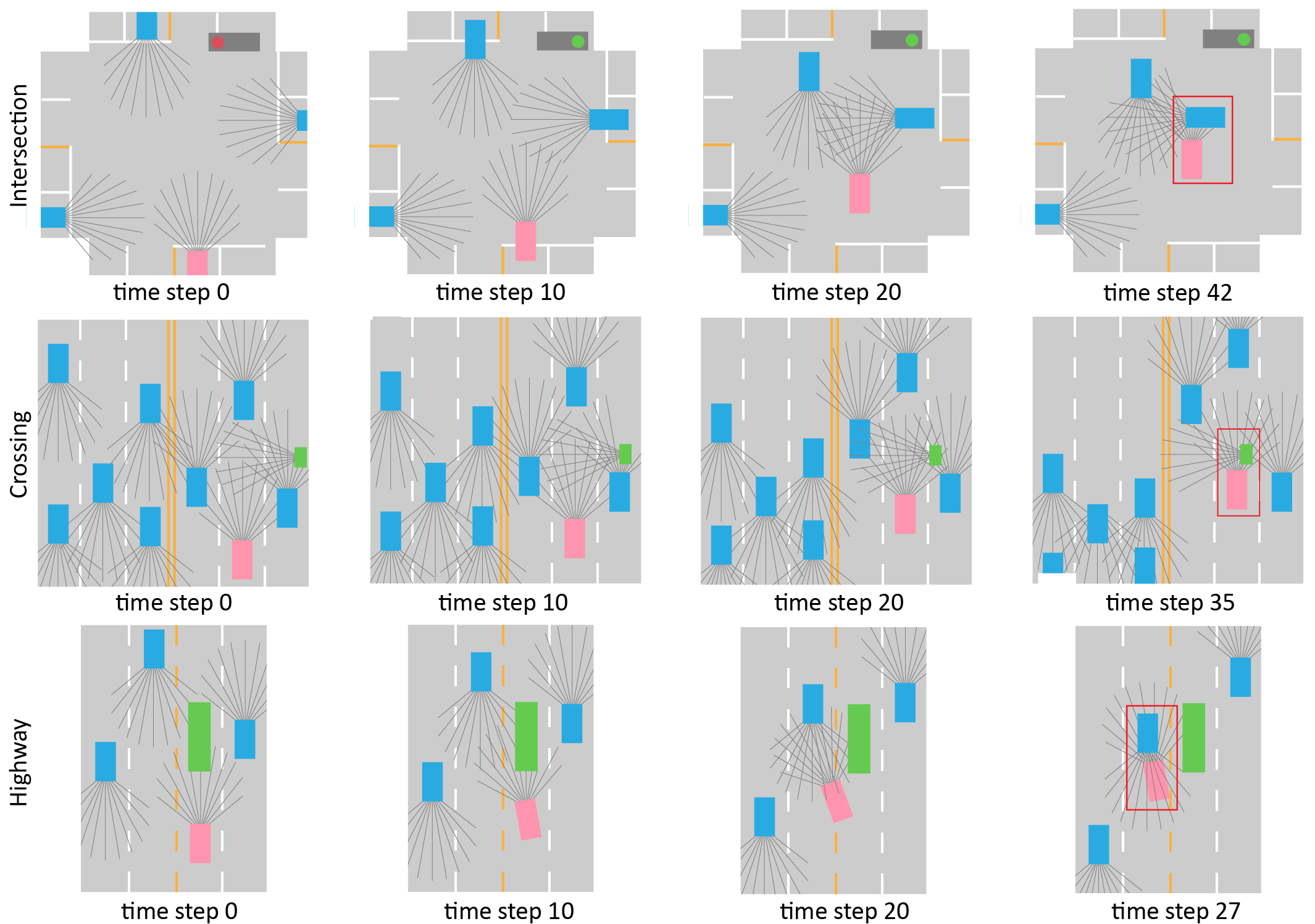}
  \caption{\rebuttal{Screenshots of three generated scenarios in our simulator. The pink color represents the ego vehicle, the green color represents the pedestrian, and the blue color represents other vehicles. The red rectangle indicates the occurrence of a collision.}}
  \label{app:qualitative_results}
\end{figure}

\subsection{\rebuttal{Diversity of Generated Scenarios}}

\rebuttal{
By injecting the causality into the generation process, we also restrict the space of generated scenario. Therefore, there usually exists a trade-off between the diversity and efficiency of generation. To analyze the diversity we lose by using the causal graph, we plot the variances of velocity and position of vehicles and pedestrians in Figure~\ref{app:diversity_scenario}. We can see that the difference between the two models is very small, which indicates that the diversity of our CausalAF method is not decreased due to the injection of the causal graph.
}

\begin{figure}[h]
  \centering
  \includegraphics[width=0.5\textwidth]{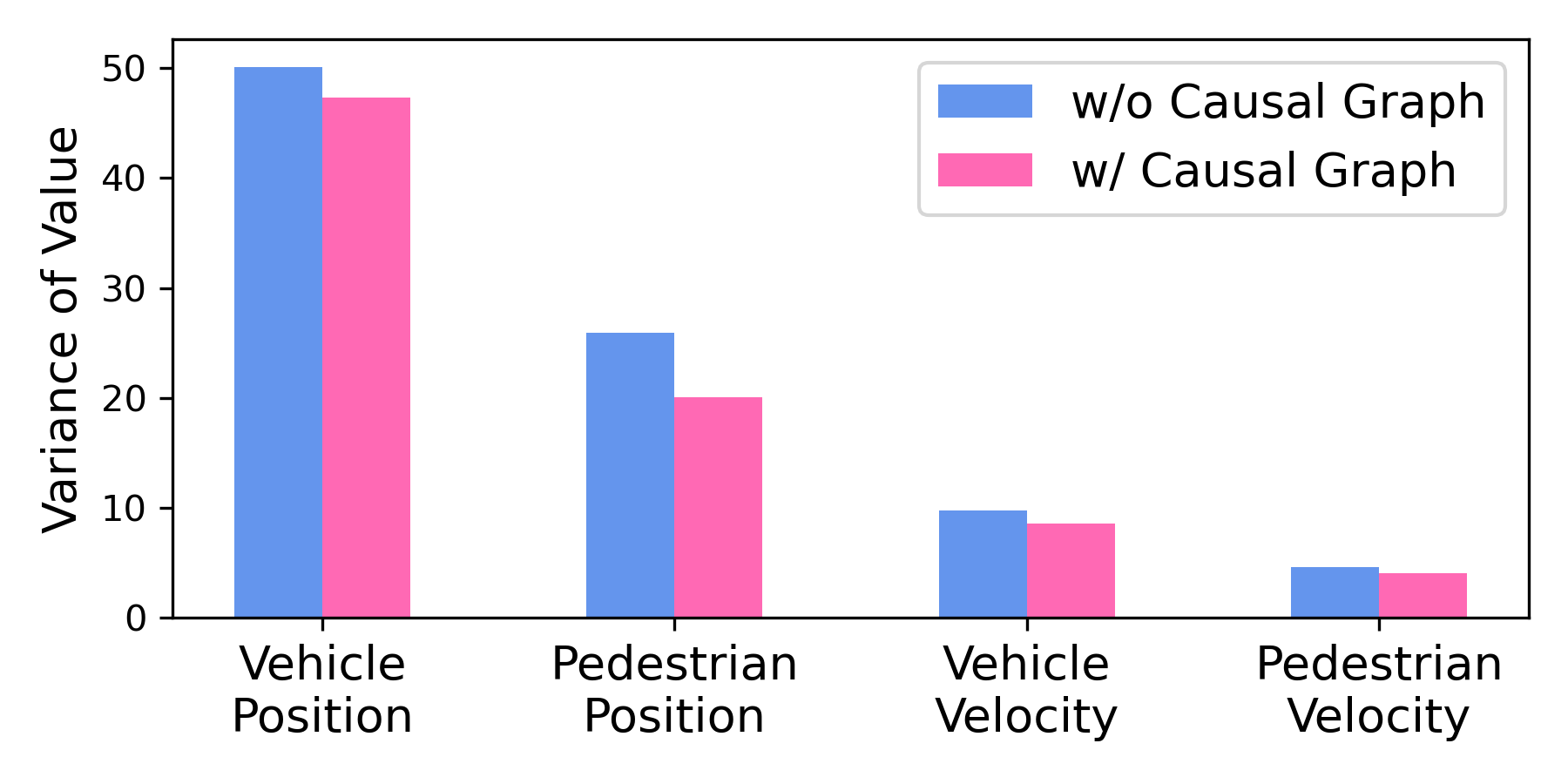}
  \caption{\rebuttal{Variance of position and velocity of generated scenarios from two different models. One is with the causal graph and the other is without the causal graph.}}
  \label{app:diversity_scenario}
\end{figure}

\end{document}